\theoremstyle{definition}
\newtheorem{definition}{Definition}[section]
\newtheorem{theorem}{Theorem}
\title{On the Measure of a Model:\\ From Intelligence to Generality}
\author{%
  Ruchira Dhar\thanks{Primary Author.} \\
  Department of Computer Science\\
  University of Copenhagen\\
  \texttt{rudh@di.ku.dk} \\
  % examples of more authors
   \And
  Ninell Oldenburg \\
  Centre for Philosophy of AI\\
  University of Copenhagen\\
  \texttt{niol@hum.ku.dk} \\
  \AND
  Anders Søgaard\\
  Department of Computer Science\\
  University of Copenhagen\\
  \texttt{soegaard@di.ku.dk} \\
  % \And
  % Coauthor \\
  % Affiliation \\
  % Address \\
  % \texttt{email} \\
  % \And
  % Coauthor \\
  % Affiliation \\
  % Address \\
  % \texttt{email} \\
}
\begin{document}

\maketitle

\begin{abstract}

Benchmarks such as ARC, Raven-inspired tests, and the Blackbird Task are widely used to evaluate the {\em intelligence} of large language models (LLMs). Yet, the concept of intelligence remains elusive—lacking a stable definition and failing to predict performance on practical tasks such as question answering, summarization, or coding. Optimizing for such benchmarks risks misaligning evaluation with real-world utility. Our perspective is that evaluation should be grounded in {\em generality} rather than abstract notions of intelligence. We identify three assumptions that often underpin intelligence-focused evaluation: {\em generality}, {\em stability}, and {\em realism}. Through conceptual and formal analysis, we show that only generality withstands conceptual and empirical scrutiny. Intelligence is not what enables generality; generality is best understood as a multitask learning problem that directly links evaluation to measurable performance breadth and reliability. This perspective reframes how progress in AI should be assessed and proposes generality as a more stable foundation for evaluating capability across diverse and evolving tasks.

\end{abstract}

\section{Introduction}

Many see large language models (LLMs) \cite{radford2019language, brown2020language, burton2024large, steyvers2025large} as steps toward \textit{artificial intelligence} (AI), or even as early prototypes of \textit{artificial general intelligence} (AGI) \cite{bubeck2023sparks, zhong2024evaluation,ilic2024evidence} i.e, systems that may eventually achieve parity with human intelligence across a wide range of abilities. If we want to assess whether LLMs are becoming more capable, we first need to ask: What does it mean for an AI system to be ``intelligent''? While ``intelligence'' may be an attractive buzzword, it is conceptually vague across disciplines \cite{howard1993intelligence, legg2007collection, wang2019defining,mitchell2024debates} and has more recently lead to researchers pointing out how such unfounded assumptions can serve a distractors in the field \cite{blili2024unsocial,mueller2024myth, blili2025stop, hendrycks2025definition}. In empirical evaluation practice, the quest for intelligence is outsourced to benchmarks of intelligence-indicating tasks \cite{horn1968organization, simon2024identifying} such as pattern abstraction \cite{chollet2019measure}, reasoning \cite{srivastava2023beyond, kazemi2025big}, or even general knowledge \cite{phan2025humanity,hendrycks2020measuring, rein2024gpqa}. However, , as we show in Section~\ref{sec2}, even intelligence benchmarks fail to predict what we actually care about, i.e., which models are better in terms of human preference and performance on real-world tasks. Such disconnect suggests that intelligence, as it is currently framed, may offer a less stable foundation for evaluating modern AI systems than is often implicitly assumed.

Our main contribution in this work is %{\em not} the mere observation that the North Star of some of our efforts is a figment of imagination; but rather 
to show that this search effort for better models might be redirected toward something more concrete and grounded. We identify the three implicit assumptions behind using intelligence as a yardstick for evaluating LLMs: that we seek general-purpose systems ({\bf generality}), that there exists a fixed set of tasks worth mastering ({\bf stability}), and that intelligence is a real, general capacity required to solve those tasks ({\bf realism}). We show how differing strands of current AI research embody these principles and demonstrate the independence and necessity of the generality principle. This analysis leads to the core tenet of our perspective: Generality offers a sufficient foundation for evaluation, whereas stability and realism have little support. We also outline the case for generality as a promising foundation for evaluation:  it sidesteps some of the conceptual challenges associated with intelligence and is supported by insights from multitask learning theory. Our aim with this work is to bring clarity to evaluation by making a simple point: rather than asking whether a model is ``intelligent'', we suggest focusing on how generally and reliably it performs, reframing evaluation around a more grounded notion of generality.

\section{Why Intelligence is Problematic}
\label{sec2}

The idea that some models are simply ``more intelligent'' than others has become central to how progress in language modeling is communicated \cite{bubeck2023sparks, morris2023levels}. Benchmarks such as ARC \cite{chollet2019measure}, Raven tests \cite{abdelkarim2025evaluating} or the Blackbird Task \cite{merlo-2023-blackbird} are often used to make such claims, implicitly treating benchmark performance as a proxy for general intelligence or capability. Yet, what these benchmarks actually measure is rarely interrogated, and our analysis suggests that the picture they paint might be incomplete and potentially misleading.

\subsection{Conceptual Instability}
\label{sec2.1}

The discussion about general intelligence originates with the controversy between Spearman and Thomson \cite{thomson1916}, but has since resurfaced within neuroscience \cite{sims2013theory},  cognitive science \cite {sternberg1990metaphors,pfeifer2001understanding,sternberg2005cognition}, and  education \cite{demetriou20051, ritchie2018much}---yet there is little consensus on what constitutes intelligence. The search for capabilities that reliably indicate intelligence is hindered by the fact that it remains a fluid and contested concept \cite{legg2007collection, holm2024intelligence} across disciplines. Efforts to anchor intelligence in neuroscience have fallen short \cite{mackintosh1986biology,sternberg2003biological,pietschnig2015meta, gignac2017brain}. %; proposed neurobiological correlates, such as brain volume or specific neural networks, have proven inconsistent and insufficient as explanatory factors. While the frontoparietal network of the human brain has been associated with cognitive tasks like reasoning, planning, etc, \cite{colom2010human}, the search for a biological basis of the monolithic property of intelligence has not been met with much success \cite{mackintosh1986biology, sternberg2003biological}. 
Similarly, intelligence has long resisted a stable, unified definition in cognitive science. Far from converging on a shared theoretical framework, researchers across subfields have produced divergent, and often incompatible, models of what intelligence is. While earlier works have emphasized problem-solving and reasoning as hallmarks of intelligence \cite{simon1976computer}, recent research has indicated the distributed and task-specific nature of neural activation--- where intelligence is not a system in itself but an emergent property of reorganization of task-specific neural systems in the brain \cite{anderson2010neural, pessoa2014understanding, bola2015dynamic, thiele2022multitask}. Together, these perspectives reinforce a central point: Intelligence may not be a unitary, well-specified capacity, but a family of loosely related, context-sensitive abilities, shaped by cultural, developmental, neural, and environmental factors \cite{nisbett2001culture, sternberg2004culture, kan2013nature}. %  and how it should be understood.  Research from the situated cognition tradition has also argued that intelligence is the result of interaction with the environment, not a monolithic property in itself \cite{brooks1991intelligence, clark1998being, varela2017embodied}. Studies in cross-cultural cognition have also indicated that ``intelligent" behaviours vary significantly with cultural and educational background \cite{nisbett2001culture, sternberg2004culture, sternberg2004intelligence, kan2013nature}.  
In AI, we have largely inherited this ambiguity. The focus has been on defining intelligence in relation to the abilities of systems to perform certain tasks in environments in ways similar to humans \cite{legg2007collection,poole2010artificial, russell2016artificial, kaplan2019siri, Bartneck2021}. Just as IQ Tests are incapable of indicating performance in significant cognitive capacities \cite{schonemann1983iq, detterman1989correlations, stanovich2009intelligence, raven1989standard,gould1996mismeasure,henrich2010weirdest}, intelligence benchmarks have been argued to be %are as 
problematic \cite{bender2021dangers} and %reflect a curated mix of implicit assumptions and biased task choices. Like IQ testing in humans, these benchmarks can produce impressive rankings but offer limited insight into the scope, limitations, or adaptability of a model’s capabilities. The result may lead to an \textit{illusion} of competence: a model that appears capable because it scores well on a narrow set of tasks, but fails to generalize to the broader and more variable demands of real-world deployment. As a result, benchmarks built on ``intelligence-indicating'' tasks rely on vague constructs without clear operational definitions, making them 
an unstable foundation for evaluating AI systems \cite{mitchell2024debates, pfister2025understanding}.

\subsection{Illusion of Competence}

\begin{figure}[!t]
  \centering
  \includegraphics [width=0.8\textwidth]{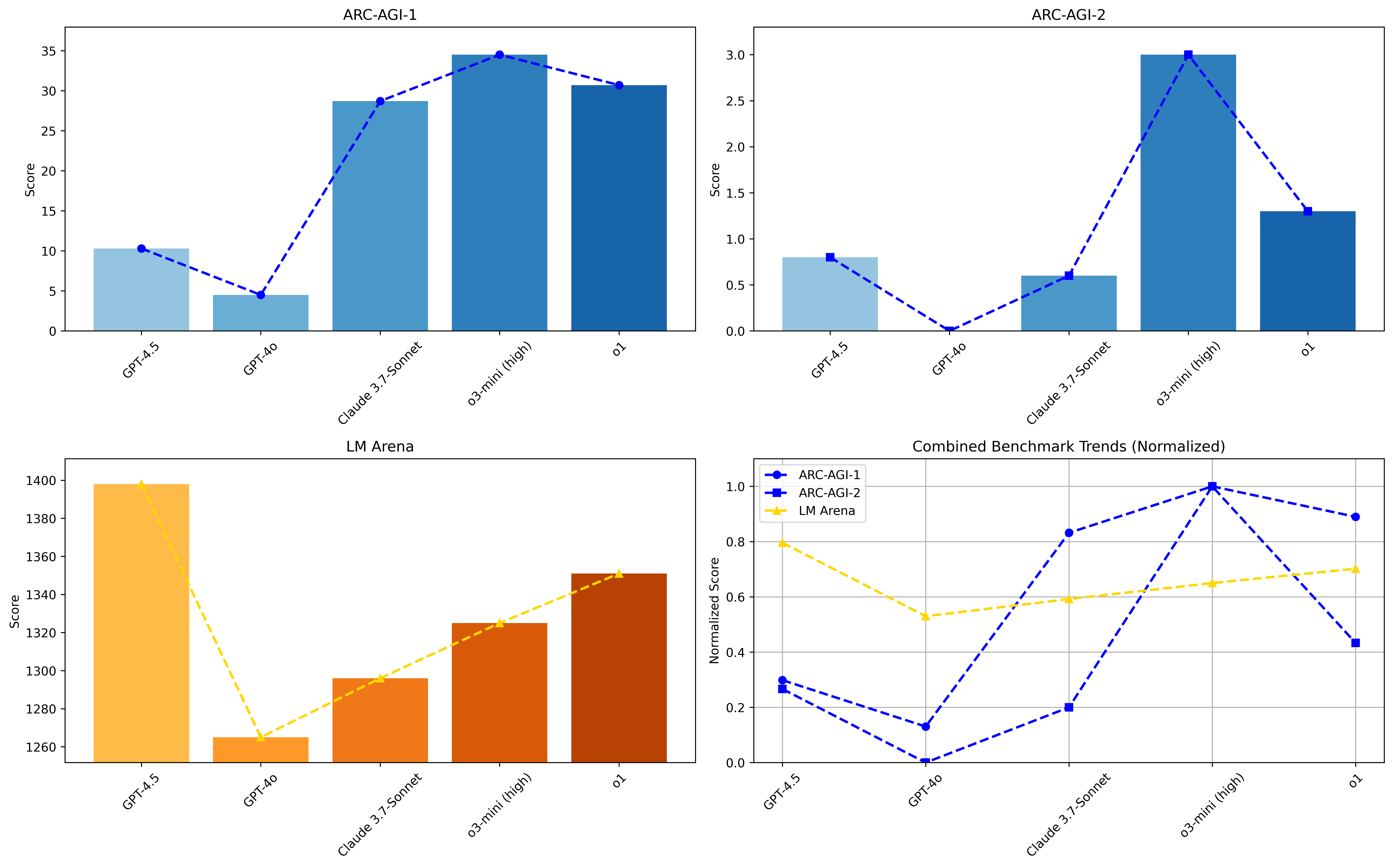}
  \caption{Comparing performance of LLMs on AGI benchmarks \texttt{ARC-AGI-1}, \texttt{ARC-AGI-2}  and preference-based benchmark like \texttt{LMArena}.  }
  \label{fig1}
\end{figure}

\begin{figure}[!t]
  \centering
  \includegraphics[width=0.8\textwidth]{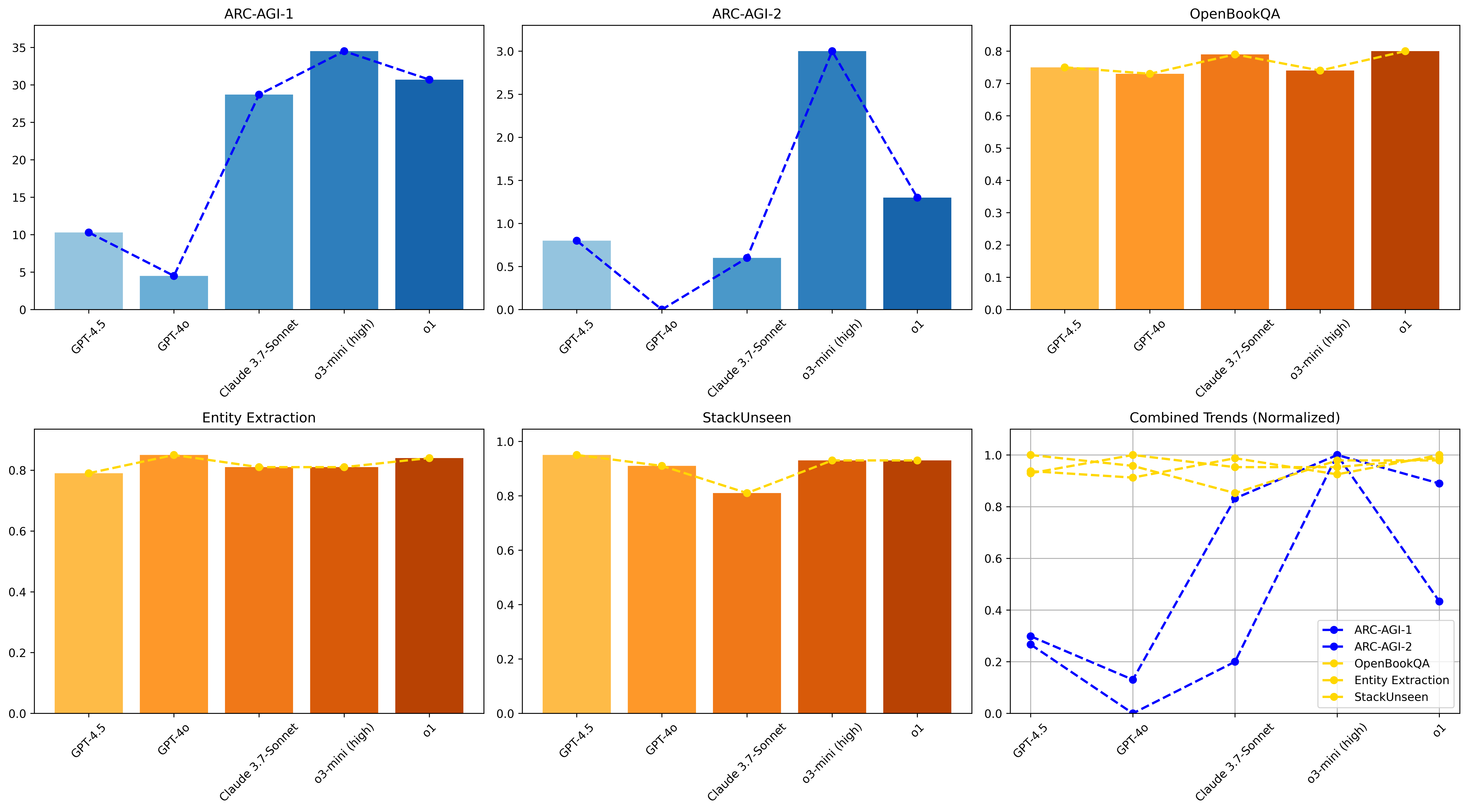}
  \caption{The performance of  LLMs on task-specific benchmarks \texttt{OpenBookQA, Entity Extraction, and StackUnseen. }}
  \label{fig2}
\end{figure}

Despite their widespread use, intelligence benchmarks often fail to reflect a model's effectiveness in real-world applications. We provide empirical evidence in \autoref{fig1} and \autoref{fig2} that strong performance on intelligence benchmarks exhibits limited correlation with human preference or task performance. As shown in \autoref{fig1}, the performance trends of different models for both ARC-AGI \cite{chollet2019measure} (considered frontier intelligence benchmarks) and LMArena \cite{chiang2024chatbot} differ significantly, i.e., a model that scores higher in ARC is not necessarily also better in LMArena (considered a human preference benchmark). Also, as seen in \autoref{fig2}, these performance trends do not translate into universal capability and into reliable performance across the kinds of real-world tasks language models are often used for \footnote{The data source and code is provided in Supplementary Materials.}. 
This undermines the assumption that intelligence benchmark scores are good predictors of general-purpose competence. 

\subsection{Rethinking our Ground}

The issues point to a deeper problem with how intelligence has been framed in AI evaluation. The continued reliance on intelligence-style benchmarks reflects an implicit belief that we are measuring something stable and meaningful, even as evidence suggests otherwise. This belief is sustained by a set of assumptions and biases that often go unstated \cite{gignac2015raven,chollet2019measureintelligence, hoffmann2022ai}. Benchmarks are treated as if they reflect an underlying cognitive trait, and success on some fixed tasks is seen as progress toward intelligence (but never quite reaching intelligence). These patterns persist because the ambiguity surrounding intelligence allows these assumptions to remain unchallenged. What, then, are we actually assuming when we say a model is becoming more intelligent? The next section will detail assumptions through evaluation studies.

\section{Unpacking Intelligence}
\label{sec3}

One of the earliest definitions of intelligence came from Marvin Minsky, a founding father of AI, and was as follows: ``Artificial Intelligence is the science of making machines do things that would require intelligence if done by people'' \cite{bolter1984turing,fjelland2020general}.  However, this leaves open the question of what intelligence is, and how we best make machines do tasks that require it. In the literature, we have two divergent commitments. The first comes from ideas on generality from John McCarthy (another founding father of AI),  which advocates development of system solutions that are independent of narrow problem domains \cite{10.1145/33447.33448}. This view is also exemplified from the rich and long literature on intelligence testing, where Legg and Hunter coalesce 70 definitions into a core tenet: ``Intelligence measures an agent's ability to achieve goals in a wide range of environments'' \cite{legg2007collection, chollet2019measureintelligence}. In contrast, we have the more current discourse  on intelligence as an emergent property that can be learned from data, a property that a model can possess \cite{bubeck2023sparks,chollet2019measure, Serpico2018-SERCTG,10.5555/3666122.3669277}. This dichotomy creates two distinct evaluation paradigms:

\begin{itemize}
    \item[i)] The first view emphasizes \textit{Generality}. A system is intelligent because it is able to achieve a wide range of goals. Intelligence is demonstrated directly by the scope of performance. \footnote{Here, generality refers to the empirical manifestation of task performance across environments, not an abstract property of ``generalization'' in the learning-theoretic sense \cite{kozlov2023ai, hupkes2023taxonomy}.}
    \item[ii)] The second view emphasizes \textit{Realism}. A system is intelligent because it possesses some latent property that allows it to achieve a wide range of goals. Intelligence is not only observed but posited as an explanatory trait.
\end{itemize}

Once we take the realist route, however, further commitments follow. Realism, i.e., the idea that intelligence refers to a fixed, real property, implies that the capacities unlocked by intelligence are fixed. This means that if we can enumerate these capacities, we can design representative task suites and treat performance there as evidence, i.e, a system is intelligent if it does well on a fixed representative set of tasks that embody intelligence \cite{chollet2019measureintelligence}. If not,  benchmarks only approximate intelligence, i.e, 
a system is intelligent by possessing intelligence \cite{ilievski2025aligning}, and the benchmark performance of a system is only a direct measure of its intelligence. \footnote{The first stance within realism is operationalist, while the latter is more empirical. A coherent operationalist stance would hold that purely empirical measures cannot demonstrate that different instruments measure the same property \cite{mari2023philosophical}.} Synthesizing these positions, we can see that the primary approaches to evaluating intelligence collapse onto three fundamental commitments:

\begin{tabular}{ll}
    {\sc Generality}&  We should develop a {\em all-purpose} or {\em multi-task} system .\\
    {\sc Stability} & A system should solve a {\em fixed} set of high-value intelligence indicating tasks.\\
    {\sc Realism}& We should focus on modelling \em{intelligence} itself.
    \end{tabular}

These assumptions often work in tandem, shaping how evaluation benchmarks are constructed and interpreted. Below, we clarify how current research embodies these assumptions.

\paragraph{Generality} The assumption of generality is that AI systems should ideally perform well across a broad range of tasks, rather than specialize narrowly. It has shaped the highly productive subfield of multi-task learning in the computer science research \cite{caruana1993multitask,caruana1997multitask, zhang2021survey} with concurrent applications for tasks in other scientific domains \cite{allenspach2024neural}. This assumption is also common in both the AGI literature \cite{morris2023levels} and in practical NLP benchmarking initiatives, like Big Bench \cite{srivastava2023beyond, kazemi2025big} or HELM \cite{bommasani2023holistic}, that prioritize cross-domain competence on multiple language tasks. Even the instruction-tuning of models like T0 \cite{sanh2022multitask}, FLAN-T5 \cite{longpre2023flan}, or OPT-IML \cite{iyer2022opt} all focus training on diverse prompts and task formulations with the explicit aim of cross-task generalization. These efforts reflect a growing recognition that narrow task performance is insufficient, especially as LLMs are increasingly expected to act as generalist agents \cite{hernandez2021general, moor2023foundation, zhang2024generalist}.
    
\paragraph{Stability} Stability assumes that there exists a fixed set of tasks on which evaluation can reliably represent intelligence or capability. Recent research has often tried to identify the fixed set of such tasks \cite{hendrycks2025definition} with explicit focus on few such tasks as reasoning \cite{ilic2024evidence, 10.5555/3600270.3602070, kojima2022large, morishita2024enhancing} or planning \cite{valmeekam2022large, valmeekam2023planning, valmeekam2023planbench} as important indicators of LLM performance. %Also, benchmarks like ARC \cite{chollet2019measure} or HLE \cite{phan2025humanity} are treated as canonical tests of progress in language models. 
These benchmarks encourage a view of task stability: that the relevant cognitive capabilities can be captured by a durable suite of challenge benchmarks. 
    
\paragraph{Realism} Perhaps the most pervasive assumption is that benchmark success reflects a singular underlying cognitive trait called ``intelligence''. This can be seen in general tests of intelligence \cite{chollet2019measure,phan2025humanity, cai2025mm} or IQ-style comparisons \cite{pellert2024ai, huang2024measuring, abdelkarim2025evaluating}. Even when not stated explicitly, realism underwrites the very idea that progress on benchmarks reflects deeper model traits. This assumption allows authors to claim that outperforming humans on test sets implies cognitive parity or superiority. Yet, as discussed earlier (see Section~\ref{sec2.1}), the idea of intelligence as a unified capacity is conceptually unstable and treating it as a ground truth for evaluation may be both philosophically and practically misleading.

Next, we show why generality is independent and also necessary and sufficient for evaluating models. 

\subsection{Generality is Independent}

We want to argue that accepting {\sc Generality} should not automatically lead us also to accept {\sc Stability} and {\sc Realism}. \paragraph{Formal setup.}
Let $\mathcal{T}$ be a (possibly infinite) set of tasks endowed with a probability
measure $Q$ (the \emph{task environment}). 
Each model $M$ induces a measurable performance function
\begin{equation}
    f_M : \mathcal{T} \to [0,1], 
    \qquad 
    t \mapsto f_M(t),
\end{equation}
where $f_M(t)$ denotes the normalized performance of model $M$ on task $t$. We want to define what "evaluating M" means under the three alternative assumptions.

\begin{definition}[Generality]
The \emph{generality} of a model is its expected performance across the
task environment:
\begin{equation}
    E_G(M) \;=\; \mathbb{E}_{t \sim Q}\!\big[ f_M(t) \big].
\end{equation}
It assumes no fixed task set or latent variable, only performance averaged
over the environment~$Q$.
\end{definition}

\begin{definition}[Stability]
The \emph{stability} of a model is its aggregated performance on a fixed
benchmark subset~$S \subset \mathcal{T}$:
\begin{equation}
    E_S(M) \;=\; F\!\big( ( f_M(t) )_{t \in S} \big),
\end{equation}
where $F$ is a predetermined aggregation functional. It assumes that the
same benchmark tasks remain representative.
\end{definition}

\begin{definition}[Realism]
The \emph{realism} assumption posits a latent cognitive representation
$I(M) \in \mathbb{R}^k$ and task-specific decoding functions
$g_t : \mathbb{R}^k \!\to\! [0,1]$, such that performance derives from
this shared latent space:
\begin{equation}
    E_R(M) \;=\; \mathbb{E}_{t \sim Q}\!\big[ g_t(I(M)) \big].
\end{equation}
It assumes that observable task success reflects an underlying property,
interpreted as ``intelligence''.
\end{definition}

\begin{comment}
    
\begin{definition}[Generality]
The \emph{generality} of a model is its expected performance across the
task environment:
\begin{equation}
    G(M) \;=\; \mathbb{E}_{t \sim Q}\big[ f_M(t) \big].
\end{equation}
This assumption makes no prior commitment about which tasks are
``core'' or about any latent explanatory variable.
\end{definition}

\begin{definition}[Stability]
The \emph{stability} assumption presumes that evaluation should rely on
a fixed, finite subset $S \subset \mathcal{T}$ and a fixed aggregation functional
$F : [0,1]^{|S|} \to \mathbb{R}$:
\begin{equation}
    E_{\text{stab}}(M)
    \;=\;
    F\!\big( (f_M(t))_{t \in S} \big),
\end{equation}
where $S$ and $F$ are specified in advance and do not change with~$Q$.
\end{definition}

\begin{definition}[Realism]
The \emph{realism} assumption posits the existence of a latent
``intelligence'' vector $I(M) \in \mathbb{R}^k$ and task--specific decoding
functions $g_t : \mathbb{R}^k \to [0,1]$ such that
\begin{equation}
    f_M(t) \;=\; g_t\!\big(I(M)\big),
    \qquad t \in \mathcal{T}.
\end{equation}
Here, performance on each task is determined by an unobservable
shared cognitive representation $I(M)$.
\end{definition}
\end{comment}

Consider the following thought experiment. 

\begin{tcolorbox}[
  colback=gray!5!white,
  colframe=black!75!black,
  title={\textbf{Thought Experiment: Three Robot Designers}},
  fonttitle=\bfseries,
  coltitle=black,
  boxrule=0.8pt,
  arc=4pt,
  left=6pt,
  right=6pt,
  top=6pt,
  bottom=6pt,
  width=\linewidth,
  sharp corners=south
]

Three engineers each design a robot to play \textbf{handball} ($t_H$), 
\textbf{badminton} ($t_B$), and \textbf{polo} ($t_P$).  
Each adopts one of the evaluation assumptions from the formal setup.

\vspace{0.7em}

\renewcommand{\arraystretch}{1.3}
\begin{tabularx}{\linewidth}{@{}lX@{}}
\textbf{Generality Engineer} & 
Optimizes $E_G(M)$ across the entire task environment 
$Q = \{t_H, t_B, t_P\}$, minimizing 
$\mathbb{E}_{t \sim Q}[\ell_t(x_M)]$. 
\\[-0.2em]
& \textit{Result:} Learns a representation that transfers flexibly across all tasks, 
without assuming a fixed subset or hidden essence. \\[0.4em]

\textbf{Stability Engineer} & 
Assumes the relevant tasks are known in advance, fixing 
$S = \{t_H, t_B\}$ and optimizing 
\[
E_S(M) = F\!\big( (f_M(t))_{t \in S} \big).
\]
\\[-0.2em]
& \textit{Result:} Excels on $t_H,t_B$ but fails on unseen tasks like $t_P$, showing brittleness when $Q$ shifts. \\[0.4em]

\textbf{Realism Engineer} & 
Postulates a latent ``athletic intelligence'' vector 
$I(M) \in \mathbb{R}^k$ and assumes that task performance derives from 
task-specific decoders $g_t$, such that 
\[
E_R(M) = \mathbb{E}_{t \sim Q}\!\big[ g_t(I(M)) \big].
\]
\\[-0.2em]
& \textit{Result:} Abstracts over tasks, but no finite $I(M)$ can capture the 
incompatible task gradients 
$\nabla\ell_{t_H}(x_H^\star)$, 
$\nabla\ell_{t_B}(x_B^\star)$, and 
$\nabla\ell_{t_P}(x_P^\star)$.
\end{tabularx}

\vspace{0.6em}
\textit{Conclusion:} Generality succeeds without presupposing either a fixed task set (stability) 
or a latent cognitive variable (realism), demonstrating its logical independence.

\end{tcolorbox}

This thought experiment illustrates a key insight: \textit{generality can be pursued without assuming either stability or realism}. One can seek broad capability without committing to a fixed task set or a unified latent construct. 

\subsection{(Only) Generality is Necessary}

The necessity of generality becomes clear when we ask what any evaluation aims to uncover.  Following Hernández-Orallo \textit{et al.} (2021) \cite{hernandez2021general}, evaluation can be expressed as a mapping from task to expected success, yielding an \emph{agent-characteristic curve}. In the formal setup above, each model $M$ induces a performance function 
$f_M : \mathcal{T} \to [0,1]$ that maps a task $t \in \mathcal{T}$ to its expected 
success.  Let $h(t)$ denote the \emph{difficulty} of task $t$, where 
$h : \mathcal{T} \to \mathbb{R}_{+}$ is monotonic with respect to the 
resources required to solve~$t$.  Aggregating performance over the 
distribution $Q$ of tasks at each difficulty level gives an 
\emph{agent–characteristic curve} (ACC)
\begin{equation}
    \psi_M(h) \;=\;
    \mathbb{E}_{t\sim Q \,|\, h(t)=h}\!\big[f_M(t)\big],
\end{equation}
which captures how success declines as difficulty increases.

\vspace{0.3em}
\noindent
\textbf{Why evaluation requires generality.}
If two systems $M_1$ and $M_2$ yield the same mean performance under~$Q$
but exhibit different $\psi(h)$ curves, they cannot be regarded as 
equivalent: one may fail exclusively on trivial tasks while the other fails 
only on the hardest ones.  Any evaluation rule that depends only on the 
mean of $f_M$ is therefore non-identifiable: it assigns the same score to 
agents with distinct behavioural profiles and cannot predict performance 
under a shifted or expanded difficulty distribution.  A meaningful evaluation 
must depend on the \emph{shape} of $\psi_M(h)$, not merely its average 
height.

Formally, define the \emph{spread} of the ACC as the dispersion of success
over difficulty:
\begin{equation}
    S_M^2
    \;=\;
    \int_0^\infty (h - \bar{h}_M)^2 \psi_M(h)\,dh,
    \qquad
    \bar{h}_M
    = \frac{\int_0^\infty h \psi_M(h)\,dh}
           {\int_0^\infty \psi_M(h)\,dh}.
\end{equation}
A small $S_M$ means that performance is concentrated on the easier 
portion of the domain before a clean decline—predictable and 
\emph{general} behaviour—whereas a large $S_M$ indicates scattered or 
specialised success.  The reciprocal quantity
\begin{equation}
    \Gamma_M = \frac{1}{S_M}
\end{equation}
is thus a direct measure of generality.\footnote{%
This formulation follows the generality–spread relation introduced by
Hernández-Orallo \textit{et al.} (2021).}

\vspace{0.3em}
\noindent
\textbf{Why only generality}
Without accounting for $\Gamma_M$, evaluation collapses into an 
unanchored average that changes arbitrarily with the sampling of tasks.  
By weighting performance by difficulty and including its concentration 
through~$S_M$, we obtain quantities that are invariant to task relabelling 
and stable under distributional shift.  In other words, once tasks are 
organised along a difficulty axis, the measure of how tightly performance 
is distributed across that axis becomes the only element that ensures 
comparability across environments.  Therefore, \emph{generality is the 
necessary condition for evaluation to be coherent and transferable}.  
Neither \textsc{Stability} nor \textsc{Realism} is required: fixing a task 
subset removes the difficulty structure, and positing a latent “intelligence” 
adds nothing to the observable shape of~$\psi_M(h)$.  Evaluation that is 
empirical, predictive and domain-invariant must, at minimum, 
recover~$\Gamma_M$.

\section{From Intelligence to Generality: What it Offers}
\label{sec4}

We have argued that generality is the only foundational assumption necessary for evaluating model capabilities because it does not rely on unresolved philosophical commitments and aligns with real-world deployment goals. We identify two factors in support of our position: generality is conceptually stable and it is theoretically grounded in multitask learning.

%\paragraph{It is Conceptually Stable}
%We have argued that many 
\paragraph{Conceptual Stability} Current evaluation practices often implicitly rely on unstable or unnecessary assumptions---particularly the ideas that intelligence is a unified cognitive property ({\sc Realism}), or that there exists a fixed set of core tasks capturing model ability ({\sc Stability}). In contrast, generality offers a conceptually stable and empirically grounded alternative---one that aligns more directly with how models are used and deployed. Prior work in cognitive science and AI emphasizes generalization as the hallmark of intelligent behavior \cite{lake2017building,tenenbaum2001generalization,yu2020meta,tomov2021multi, ilievski2025aligning}. Moreover, generality is flexible to task drift and evolving use cases. Unlike static benchmarks, which quickly lose relevance as models saturate their task sets or learn test-specific heuristics, generality-based evaluation can accommodate new tasks as they emerge. It requires only that we specify a diverse and representative sample of tasks at evaluation time---not that we define a canonical set of “core” challenges in advance. In short, generality is not only the most practically relevant evaluation principle, but it is also the most conceptually resilient. 

%\subsection{It is Empirically Grounded}

\paragraph{Theoretical Grounding} Generality also has strong support from learning theory and empirical evidence---especially from the literature on multitask learning (MTL) \cite{caruana1993multitask,caruana1997multitask}, a setting that explicitly embraces task diversity and seeks to learn shared inductive structure across tasks. In humans, this is often described as ``learning how to learn'' \cite{thrun1998learning, ilievski2025aligning}: leveraging knowledge from previously encountered problems to generalize more effectively to new ones. We extend the classical results \cite{baxter2000model, maurer2006bounds} to show that in multi-task learning, the generalization bound is reduced by a fixed factor. This means that evaluating performance across multiple tasks is not just more comprehensive---it gives stronger guarantees about future performance on unseen but related tasks.

\begin{theorem}
Consider an environment $\mathcal{E}$ consisting of a distribution $Q$ over tasks, where each task $P \sim Q$ is a distribution over data in a learning problem. Let $\mathcal{H}$ be a hypothesis class, $L_P(h)$ be the loss on task $P$, and $L_Q(h)$ be the model's environment average error. Then, for any $\delta>0$ (where $\delta$ is the confidence parameter), with probability at least $1-\delta$, the generalization bound is reduced by approximately a factor of $\sqrt{n}$ in the multi-task case.
\end{theorem}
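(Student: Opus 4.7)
The plan is to adapt the classical PAC-style generalization argument of Baxter (2000) and Maurer (2006) to the environment $\mathcal{E}$ defined here, and to track exactly where the $n$ tasks enter the complexity term. I would begin by recalling the single-task uniform convergence bound: for a fixed task $P$ with $m$ i.i.d.\ samples and a hypothesis class $\mathcal{H}$ of bounded complexity (say with Rademacher complexity $\mathfrak{R}_m(\mathcal{H})$), we have with probability at least $1-\delta$ that $\sup_{h \in \mathcal{H}} |L_P(h) - \hat L_P(h)| \le 2\mathfrak{R}_m(\mathcal{H}) + c\sqrt{\log(1/\delta)/m}$. This is the baseline against which the multi-task improvement will be measured.

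Next I would set up the two-level sampling that the environment $\mathcal{E}$ induces. We draw $n$ tasks $P_1,\dots,P_n \sim Q$ i.i.d., and for each $P_i$ we draw $m$ samples. The quantity of interest is the environment error $L_Q(h) = \mathbb{E}_{P \sim Q}[L_P(h)]$, estimated by $\hat L_Q(h) = \tfrac{1}{n}\sum_{i=1}^n \hat L_{P_i}(h)$. I would decompose the deviation $|L_Q(h) - \hat L_Q(h)|$ into a task-sampling term $|L_Q(h) - \tfrac{1}{n}\sum_i L_{P_i}(h)|$ and a within-task term $|\tfrac{1}{n}\sum_i (L_{P_i}(h) - \hat L_{P_i}(h))|$. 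For the first, a Hoeffding-type bound over the $n$ tasks gives a concentration rate of order $\sqrt{\log(1/\delta)/n}$. For the second, because the errors are averaged over $n$ independent tasks, the effective variance shrinks by $1/n$, yielding an additional $1/\sqrt{n}$ factor over the usual within-task term.

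I would then combine these via a uniform bound over $\mathcal{H}$, using the Baxter-style covering number of the induced function class on the product sample. The key observation is that the hypothesis-class complexity term, which in the single-task bound scales like $\mathfrak{R}_m(\mathcal{H})$, now gets divided by $\sqrt{n}$ because the empirical process is averaged over $n$ independently sampled tasks; this is precisely the mechanism identified by Maurer (2006). Conclude by choosing the appropriate aggregation of $\delta$ across the two levels of randomness so that the overall failure probability stays below $\delta$, and read off that the final bound on $\sup_{h \in \mathcal{H}} |L_Q(h) - \hat L_Q(h)|$ is a factor of $\sqrt{n}$ tighter than the single-task bound.

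The main obstacle I expect is the complexity-term step: one has to argue that the shared hypothesis class $\mathcal{H}$ permits a single covering argument across tasks rather than a naive union bound (which would cost a $\log n$ penalty and obscure the clean $\sqrt{n}$ factor). This requires either a symmetrization argument on the product of task samples or a PAC-Bayes posterior on $\mathcal{H}$ that is fixed before the tasks are drawn; the bookkeeping there, rather than any deep analytic step, is where the proof most easily slips.
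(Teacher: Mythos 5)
Your decomposition is essentially the one the paper uses: split $\bigl|L_Q(h)-\tfrac1n\sum_i \hat L_{P_i}(h)\bigr|$ by the triangle inequality into an across-task term handled by Hoeffding at rate $O\bigl(\sqrt{(C+\ln(1/\delta))/n}\bigr)$ and a within-task term handled by uniform convergence, then compare with the single-task bound $O\bigl(\sqrt{(C+\ln(1/\delta))/m}\bigr)$. Where you differ is in how the within-task average is bounded. The paper simply applies the single-task PAC bound to each $P_i$ with a union bound, getting $O\bigl(\sqrt{(C+\ln(n/\delta))/m}\bigr)$ — no $\sqrt{n}$ gain at all in that term — and then in its final step asserts an MTE bound of $O\bigl(\sqrt{(C+\ln(1/\delta))/(nm)}\bigr)$, which does not actually follow from the combined bound it just derived. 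You, by contrast, identify exactly the missing mechanism: to get the $1/\sqrt{nm}$ rate for the within-task contribution one must treat the $n$ task samples jointly, via a single symmetrization/covering argument on the product sample in the style of Baxter/Maurer, rather than a per-task union bound. So your plan is more faithful to the classical results the theorem invokes, and it repairs the step where the paper's own argument has a gap.

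One caution, which applies to your last sentence as much as to the paper's Step 3: the across-task Hoeffding term of order $\sqrt{\ln(1/\delta)/n}$ does not disappear and does not shrink with $m$, so the honest conclusion is that the bound becomes
\begin{equation}
    \sup_{h\in\mathcal{H}}\Bigl|L_Q(h)-\frac1n\sum_{i=1}^n \hat L_{P_i}(h)\Bigr|
    \;=\; O\!\Biggl(\sqrt{\frac{C+\ln(1/\delta)}{nm}} \;+\; \sqrt{\frac{\ln(1/\delta)}{n}}\Biggr),
\end{equation}
and the advertised factor-of-$\sqrt{n}$ reduction applies to the complexity/estimation-variance component, not to the whole deviation (the task-sampling term dominates whenever $n$ is small relative to $m$). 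Reading off "the final bound is a factor of $\sqrt{n}$ tighter" without this qualification drops that term silently, which is precisely the overstatement in the paper's own Step 3; keep it explicit and your argument is sound.
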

\begin{proof} We proceed in three steps: 

\textbf{Step 1: Generalization Bound for Single-Task Environment (STE)} Let $\mathbb{E}$ be an environment consisting of a distribution $Q$ over tasks. For each task $P \sim Q$, let true loss of $h \in \mathcal{H}$ be given by $L_P(h) = \mathbb{E}_{(x,y)\sim P}[\ell(h(x),y)]$ where $\ell$ is a loss function. The empirical loss computed from $m$ i.i.d.\ samples drawn from $P$: 

\begin{equation}
    \hat{L}_P(h) = \frac{1}{m} \sum_{i=1}^{m} \ell\bigl(h(x_i), y_i\bigr)
\end{equation}

and the environment--average loss is:$L_Q(h) = \mathbb{E}_{P\sim Q}[L_P(h)].$ By standard PAC-learning results (see \cite{baxter2000model}), with probability at least $1-\delta$:
\begin{equation}
    \sup_{h\in \mathcal{H}} |L_P(h) - \hat{L}_P(h)| = O\Bigg(\sqrt{\frac{C + \ln(1/\delta)}{m}}\Bigg).
\end{equation}

\textbf{Step 2: Generalization Bound for Multi-Task Environment (MTE)} Here we evaluate $h$ on $n$ tasks $P_1,\dots,P_n \sim Q$, each with $m$ samples, yielding the average empirical error as an estimate of $L_Q(h)$: $\frac{1}{n}\sum_{i=1}^n \hat{L}_{P_i}(h)$. 
%\begin{equation}
%    \frac{1}{n}\sum_{i=1}^n \hat{L}_{P_i}(h)
%\end{equation}
There are two sources of generalization error:
\begin{enumerate}
    \item \textit{Within-task generalization:} By the same PAC bound as in STE, for each $P_i$, 
    \begin{equation}
        \sup_{h\in \mathcal{H}} |L_{P_i}(h) - \hat{L}_{P_i}(h)| = O\Bigg(\sqrt{\frac{C + \ln(n/\delta)}{m}}\Bigg).
    \end{equation}
    \item \textit{Across-task generalization:} Since tasks are drawn i.i.d. from $Q$, by Hoeffding’s inequality:
    \begin{equation}
        \sup_{h\in\mathcal{H}} |L_Q(h) - M(h)| = O\Bigg(\sqrt{\frac{C + \ln(1/\delta)}{n}}\Bigg),
    \end{equation}
    where $M(h) = \frac{1}{n} \sum_{i=1}^{n} L_{P_i}(h)$ is the average true error across task. 
\end{enumerate}

Combining (7) and (8) via the triangle inequality (check \ref{appendix:a} for details),
\begin{equation}
    \sup_{h\in\mathcal{H}} \Big|L_Q(h) - \frac{1}{n} \sum_{i=1}^n \hat{L}_{P_i}(h)\Big| \leq O\Bigg(\sqrt{\frac{C + \ln(n/\delta)}{m}} + \sqrt{\frac{C + \ln(1/\delta)}{n}}\Bigg).
\end{equation}

\textbf{Step 3: Comparison of Bounds.} 
We derive the STE bound:
\begin{equation}
    L_Q(h) \leq \hat{L}_{P}(h) + O\Bigg(\sqrt{\frac{C + \ln(1/\delta)}{m}}\Bigg),
\end{equation}
We derive the MTE bound as:
\begin{equation}
    L_Q(h) \leq \frac{1}{n}\sum_{i=1}^n \hat{L}_{P_i}(h) + O\Bigg(\sqrt{\frac{C + \ln(1/\delta)}{n m}}\Bigg),
\end{equation}
\end{proof}

Thus, we see that in a learning environment where tasks are drawn i.i.d. from a distribution Q, the single–task generalization bound decays at a rate inversely proportional to $m$ (the number of samples in the task) while for multi-task environments, the error decays much faster at a rate of $1/\sqrt{mn}$ where $n$ is the number of tasks evaluated in our environment. This $\sqrt{n}$ reduction results from combining within-task PAC bounds with an across-task concentration (via Hoeffding’s inequality), thereby demonstrating that multi–task evaluation (or learning) effectively reduces the estimation variance. 

\begin{theorem} Averaging the empirical accuracy over $n$ independent tasks reduces the estimation error by approximately a factor of $\sqrt{n}$ compared to evaluating on a single task.
\end{theorem}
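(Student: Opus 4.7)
The plan is to derive Theorem 2 as a direct corollary of Theorem 1, isolating the estimation error scaling and making the $\sqrt{n}$ factor visible by ratio comparison. First I would define the two objects of interest: the single-task estimation error $\varepsilon_1(h) = |L_Q(h) - \hat{L}_P(h)|$ and the multi-task estimation error $\varepsilon_n(h) = |L_Q(h) - \tfrac{1}{n}\sum_{i=1}^n \hat{L}_{P_i}(h)|$. The claim of Theorem 2 is that, uniformly over $h \in \mathcal{H}$ and with probability at least $1-\delta$, the high-probability bound on $\varepsilon_n$ is smaller than that on $\varepsilon_1$ by a factor of $\sqrt{n}$.

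Next I would simply invoke the two bounds already proved as equations (10) and (11) of Theorem~1. Equation (10) bounds $\varepsilon_1$ by $O(\sqrt{(C + \ln(1/\delta))/m})$, while equation (11) bounds $\varepsilon_n$ by $O(\sqrt{(C + \ln(1/\delta))/(nm)})$. Dividing one by the other gives exactly $\sqrt{n}$, so the theorem follows immediately. To make the proof self-contained rather than purely referential, I would briefly re-derive the multi-task rate from first principles: treat each $\hat{L}_{P_i}(h)$ as a bounded random variable with mean $L_Q(h)$, observe that independence of tasks (drawn i.i.d.\ from $Q$) and of within-task samples gives independent summands, and apply Hoeffding's inequality to the average $\tfrac{1}{n}\sum_i \hat{L}_{P_i}(h)$. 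The variance of this average is $1/n$ times the variance of a single $\hat{L}_{P_i}(h)$, so the concentration radius shrinks by $\sqrt{n}$.

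The main obstacle will be bookkeeping rather than any deep new estimate: one must be careful that the capacity term $C$ (covering number, VC-dimension, or Rademacher complexity) enters both bounds identically so that it cancels in the ratio, and that the within-task noise $\hat{L}_{P_i}(h) - L_{P_i}(h)$ is not double-counted. Because Theorem 2 fixes the per-task sample size $m$ and varies only the number of tasks $n$, the within-task term contributes a common additive $O(\sqrt{(C+\ln(n/\delta))/m})$ floor that is dominated by the across-task term once $n$ is not too close to $m$; I would note this explicitly so the $\sqrt{n}$ improvement is understood as a statement about the task-sampling component of the estimation error.

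Finally, I would close the proof by stating the quantitative conclusion in the form the theorem uses: for any $\delta > 0$, with probability at least $1-\delta$, $\varepsilon_n(h) \leq \varepsilon_1(h)/\sqrt{n}$ up to lower-order terms, establishing that multi-task evaluation offers provably tighter generalization guarantees. This framing also makes clear why generality is not merely a normative preference but a statistically preferred measurement strategy, reinforcing the paper's central claim.
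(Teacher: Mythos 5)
Your proposal is correct at the same level of rigor as the paper and follows essentially the same route: the paper's own proof likewise takes the per-task Hoeffding deviation of order $O\bigl(\sqrt{(C+\ln(1/\delta))/m}\bigr)$, uses independence of the $n$ tasks to concentrate the averaged empirical accuracy around the environment average at rate $O\bigl(\sqrt{(C+\ln(1/\delta))/(nm)}\bigr)$, and reads off the $\sqrt{n}$ factor by comparison, exactly as you do by taking the ratio of the STE and MTE bounds from Theorem~1. The only differences are cosmetic (the paper states it for $0$--$1$ accuracy rather than loss) plus your added caveat about the within-task $O\bigl(\sqrt{(C+\ln(n/\delta))/m}\bigr)$ floor, which is if anything slightly more careful than the paper's own statement.
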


\begin{proof}
For each task $P$, standard concentration results (e.g., via Hoeffding’s inequality) ensure that the deviation between the true accuracy $A_P(h)=\mathbb{E}_{(x,y)\sim P}\bigl[\mathbf{1}\{h(x)=y\}\bigr]$
and its empirical counterpart 
\[
\hat{A}_P(h)=\frac{1}{m}\sum_{i=1}^{m}\mathbf{1}\{h(x_i)=y_i\}
\]
is of order 
\[
O\Bigg(\sqrt{\frac{C+\ln(1/\delta)}{m}}\Bigg),
\]
where $C$ reflects the complexity of $\mathcal{H}$. Since the tasks are independent, averaging over $n$ tasks gives the average empirical accuracy
\[
\hat{M}(h)=\frac{1}{n}\sum_{j=1}^{n}\hat{A}_{P_j}(h),
\]
which concentrates around the environment–average accuracy $A_Q(h)=\mathbb{E}_{P\sim Q}\bigl[A_P(h)\bigr]$ with a deviation of order
\[
\Big|A_Q(h)-\hat{M}(h)\Big|=O\Bigg(\sqrt{\frac{C+\ln(1/\delta)}{nm}}\Bigg).
\]
Thus, the estimation error is reduced by roughly a factor of $\sqrt{n}$ compared to the single-task case.
\end{proof}

\section{Alternative Perspectives}
\label{sec5}

We anticipate three recurring objections to our perspective. Each reflects a familiar intuition within the community, and we attempt to put these under closer scrutiny. 

\paragraph{We Are Already Doing Multitask Learning} While it is true that training paradigms today often reflect a mix of tasks and data sources \cite{sanh2022multitask,longpre2023flan, iyer2022opt}, this does not imply that multitask learning serves as a guiding principle in evaluation. Model capabilities are often interpreted through single-task type benchmarks, which are taken as evidence of intelligence \cite{chollet2019measure,phan2025humanity,rein2024gpqa}. Simultaneously, the discourse around AGI is increasingly popular \cite{bubeck2023sparks, morris2023levels, feng2024how, you2024far, seferis2024benchmark} and has moved goalposts from behavior to inferred property. If multitask learning were central, we would see evaluation grounded in formal principles such as task distributions, generalization error bounds, or transfer smoothness. But these are largely absent. 

\paragraph{We Don't Need Generality} If a domain-specific system works well, why prioritize generality? Yet the risks here are clear. Narrow systems, even if performant in specific contexts, are often brittle, redundant, and difficult to maintain across varied deployment contexts. The emergence of LLMs precisely reflects this \cite{radford2019language, brown2020language}. In dynamic environments where tasks drift, generality is not a luxury—it is a precondition for robustness \cite{sogaard-etal-2021-need}.

\paragraph{Intelligence is Real} A third objection begins from realism: the belief that “intelligence” is a unified cognitive capacity underlying performance. This view is intuitive—humans often excel across multiple tasks—but general behavior can instead emerge from overlapping task demands and shared inductive biases across learning problems \cite{thrun1998learning, baxter2000model, maurer2006bounds}. Per Occam’s razor, we don't need intelligence if the observed correlations among task performances can be explained without it. Intelligence may exist, but its invocation in evaluation adds conceptual weight without operational gains.

\section{Conclusion}
\label{sec6}

In this work, we have put forth the perspective that model evaluation should be grounded in \emph{generality}—the breadth and consistency of performance across tasks—rather than in abstract notions of \emph{intelligence}. Unlike intelligence, which rests on unstable conceptual and empirical foundations, generality offers a measurable, theoretically grounded, and operationally meaningful principle. It captures what truly matters for deployment: how reliably a system performs when tasks vary or evolve.

By showing that generality is both independent from, and sufficient for, coherent evaluation, we provide a framework that unifies conceptual clarity with formal rigor. This reframing is increasingly necessary as language models are applied in open and shifting environments, where success cannot be defined by static benchmarks \cite{kiela2021dynabench, hofmann2025fluid, kim2025benchmark} or latent cognitive claims \cite{gignac2015raven, blili2025stop} . Future progress in AI should therefore be assessed not by how “intelligent’’ a model appears, but by how \emph{generally and dependably} it performs across the diverse tasks we ask of it.

\bibliography{neurips_2025}

\newpage
\appendix
\label{appendix:a}
\section{Applying Triangle Inequality}

To derive the final generalization bound, we apply the 	\textbf{triangle inequality}. We aim to bound the difference between the 	\textbf{true expected loss} over all tasks, $L_Q(h)$, and the 	\textbf{empirical loss estimate} based on sampled tasks and finite data:

\begin{equation}
    \sup_{h\in\mathcal{H}} \Big|L_Q(h) - \frac{1}{n} \sum_{i=1}^{n} \hat{L}_{P_i}(h)\Big|.
\end{equation}

Using the definition of the empirical mean of true losses across sampled tasks,
\begin{equation}
    M(h) = \frac{1}{n} \sum_{i=1}^{n} L_{P_i}(h),
\end{equation}
We rewrite the expression as:

\begin{equation}
    \sup_{h\in\mathcal{H}} \Big|L_Q(h) - M(h) + M(h) - \frac{1}{n} \sum_{i=1}^{n} \hat{L}_{P_i}(h)\Big|.
\end{equation}

Applying the  inequality:

\begin{equation}
    \sup_{h\in\mathcal{H}} \Big|L_Q(h) - \frac{1}{n} \sum_{i=1}^{n} \hat{L}_{P_i}(h)\Big|
    \leq \sup_{h\in\mathcal{H}} |L_Q(h) - M(h)| + \sup_{h\in\mathcal{H}} |M(h) - \frac{1}{n} \sum_{i=1}^{n} \hat{L}_{P_i}(h)|.
\end{equation}

From the generalization bounds:

\begin{equation}
    \sup_{h\in\mathcal{H}} |L_Q(h) - M(h)| = O\Bigg(\sqrt{\frac{C + \ln(1/\delta)}{n}}\Bigg),
\end{equation}

\begin{equation}
    \sup_{h\in\mathcal{H}} |M(h) - \frac{1}{n} \sum_{i=1}^{n} \hat{L}_{P_i}(h)| = O\Bigg(\sqrt{\frac{C + \ln(n/\delta)}{m}}\Bigg),
\end{equation}

We combine these results to obtain:

\begin{equation}
    \sup_{h\in\mathcal{H}} \Big|L_Q(h) - \frac{1}{n} \sum_{i=1}^{n} \hat{L}_{P_i}(h)\Big| \leq O\Bigg(\sqrt{\frac{C + \ln(n/\delta)}{m}} + \sqrt{\frac{C + \ln(1/\delta)}{n}}\Bigg).
\end{equation}

This bound demonstrates that the overall generalization error consists of two independent components:

\begin{itemize}
    \item Within-task generalization error, which decreases as the number of samples per task ($m$) increases.
    \item Across-task generalization error, which decreases as the number of sampled tasks ($n$) increases.
\end{itemize}

Thus, we effectively combine these two error sources into a single, interpretable bound, ensuring a rigorous multi-task evaluation framework.

\newpage

\end{document}